\def\Rset{\mathbb{R}}
\DeclareMathOperator*{\E}{\rm E}
\newcommand{\sigmaF}{\mathcal{F}}
\newcommand{\set}[1]{\{#1\}}
\newcommand{\mat}[1]{{\mathbf #1}}
\newcommand{\dis}{\mathrm{disc}}
\renewcommand{\L}{{\cal L}}
\newcommand{\cX}{\mathcal{X}}
\newcommand{\cY}{\mathcal{Y}}
\newcommand{\cZ}{\mathcal{Z}}
\newcommand{\cA}{\mathcal{A}}
\renewcommand{\u}{\mat{u}}
\newcommand{\x}{\mat{x}}
\newcommand{\w}{\mat{w}}
\newcommand{\R}{\mathfrak{R}}
\renewcommand{\S}{{\mathcal S}}
\newcommand{\tts}{\small \tt}
\newcommand{\h}{\widehat}
\newcommand{\wt}{\widetilde}
\newcommand{\e}{\epsilon}
\newcommand{\ssigma}{{\boldsymbol \sigma}}
\newcommand{\Mu}{{\boldsymbol \mu}}
\newenvironment{proof*}{\noindent{\bf Proof:}}{}
\newcommand{\ignore}[1]{}
\begin{document}

\title{New Analysis and Algorithm for\\ Learning with Drifting Distributions}
\titlerunning{Learning with Drifting Distributions}

\author{Mehryar Mohri\inst{1,2}
\and Andres Mu\~noz Medina\inst{1}}
\institute{Courant Institute of Mathematical Sciences, New York, NY.
\and Google Research, New York, NY.}

\maketitle

\begin{abstract}
  We present a new analysis of the problem of learning with drifting
  distributions in the batch setting using the notion of
  discrepancy. We prove learning bounds based on the Rademacher
  complexity of the hypothesis set and the discrepancy of
  distributions both for a drifting PAC scenario and a tracking
  scenario. Our bounds are always tighter and in some cases
  substantially improve upon previous ones based on the $L_1$
  distance.  We also present a generalization of the standard on-line
  to batch conversion to the drifting scenario in terms of the
  discrepancy and arbitrary convex combinations of hypotheses.  We
  introduce a new algorithm exploiting these learning guarantees,
  which we show can be formulated as a simple QP. Finally, we report
  the results of preliminary experiments demonstrating the benefits of
  this algorithm.

\end{abstract}

{\bf Keywords:} Drifting environment, generalization bound, domain adaptation.

\section{Introduction}

In the standard PAC model \citep{Valiant1984} and other similar theoretical
models of learning \citep{Vapnik1998}, the distribution according to
which training and test points are drawn is fixed over time. However,
for many tasks such as spam detection, political sentiment analysis,
financial market prediction under mildly fluctuating economic
conditions, or news stories, the learning environment is not
stationary and there is a continuous drift of its parameters over
time.

There is a large body of literature devoted to the study of related
problems both in the on-line and the batch learning scenarios. In the
on-line scenario, the target function is typically assumed to be fixed
but no distributional assumption is made, thus input points may be
chosen adversarially \citep{Cesa-BianchiLugosi2006}. Variants of this
model where the target is allowed to change a fixed number of times
have also been studied
\citep{Cesa-BianchiLugosi2006,HerbsterWarmuth1998,HerbsterWarmuth2001,CavallantiCesa-BianchiGentile2007}.
In the batch scenario, the case of a fixed input distribution with a
drifting target was originally studied by Helmbold and Long
\cite{HelmboldLong1994}. A more general scenario was introduced by
Bartlett \cite{Bartlett1992} where the joint distribution over the
input and labels could drift over time under the assumption that the
$L_1$ distance between the distributions in two consecutive time steps
was bounded by $\Delta$. Both generalization bounds and lower bounds
have been given for this scenario \citep{Long1999,BarveLong1997}.  In
particular, Long \cite{Long1999} showed that if the $L_1$ distance
between two consecutive distributions is at most $\Delta$, then a
generalization error of $O((d\Delta)^{1/3})$ is achievable and Barve
and Long \cite{BarveLong1997} proved this bound to be tight.  Further
improvements were presented by Freund and Mansour
\cite{FreundMansour1997} under the assumption of a constant rate of
change for drifting.  Other settings allowing arbitrary but infrequent
changes of the target have also been studied
\cite{BartlettBen-DavidKulkarni2000}. An intermediate model of drift
based on a {\tt \small near} relationship was also recently introduced
and analyzed by \cite{CrammerEven-DarMansourWortman2010} where
consecutive distributions may change arbitrarily, modulo the
restriction that the region of disagreement between nearby functions
would only be assigned limited distribution mass at any time.

This paper deals with the analysis of learning in the presence of
drifting distributions in the batch setting. We consider both the
general drift model introduced by \cite{Bartlett1992} and a related
drifting PAC model that we will later describe. We present new
generalization bounds for both models (Sections~\ref{sec:pac} and
\ref{sec:tracking}). Unlike the $L_1$ distance used by previous
authors to measure the distance between distributions, our bounds are
based on a notion of \emph{discrepancy} between distributions
generalizing the definition originally introduced by
\cite{MansourMohriRostamizadeh2009} in the context of domain
adaptation. The $L_1$ distance used in previous analyses admits
several drawbacks: in general, it can be very large, even in favorable
learning scenarios; it ignores the loss function and the hypothesis
set used; and it cannot be accurately and efficiently estimated from
finite samples (see for example lower bounds on the sample complexity
of testing closeness by \cite{Valiant2011}). In contrast, the
discrepancy takes into consideration both the loss function and the
hypothesis set.

The learning bounds we present in Sections~\ref{sec:pac} and
\ref{sec:tracking} are tighter than previous bounds both because they
are given in terms of the discrepancy which lower bounds the $L_1$
distance, and because they are given in terms of the Rademacher
complexity instead of the VC-dimension. Additionally, our proofs are
often simpler and more concise.  We also present a generalization of
the standard on-line to batch conversion to the scenario of drifting
distributions in terms of the discrepancy measure
(Section~\ref{sec:online}). Our guarantees hold for convex
combinations of the hypotheses generated by an on-line learning
algorithm.  These bounds lead to the definition of a natural
meta-algorithm which consists of selecting the convex combination of
weights in order to minimize the discrepancy-based learning bound
(Section~\ref{sec:algorithm}). We show that this optimization problem
can be formulated as a simple QP and report the results of preliminary
experiments demonstrating its benefits. Finally we will discuss the
practicality of our algorithm in some natural scenarios.

\section{Preliminaries}
\label{sec:prelim}

In this section, we introduce some preliminary notation and key
definitions, including that of the \emph{discrepancy} between
distributions, and describe the learning scenarios we consider.

Let $\cX$ denote the input space and $\cY$ the output space.  We
consider a loss function $L\colon \cY \times \cY \to \Rset_+$ bounded
by some constant $M>0$.  For any two functions $h, h'\colon \cX \to \cY$ and any
distribution $D$ over $\cX\times \cY$, we denote by $\L_D(h)$ the
expected loss of $h$ and by $\L_D(h, h')$ the expected loss of $h$
with respect to $h'$:
\begin{equation}
\L_D(h) = \E_{(x,y) \sim D} [L(h(x),y)]
\qquad \text{and} \qquad
\L_D(h, h') = \E_{x \sim D^1} [L(h(x), h'(x))],
\end{equation}
where $D^1$ is the marginal distribution over $\cX$ derived from $D$.
We adopt the standard definition of the empirical Rademacher
complexity, but we will need the following sequential definition of a
Rademacher complexity, which is related to that of \cite{Rakhlin2010}.
\begin{definition}
\label{def:EmpiricalRademacher}
Let $G$ be a family of functions mapping from a set $\cZ$ to $\Rset$
and $S = (z_1, \ldots, z_T)$ a fixed sample of size $T$ with elements
in $\cZ$.  The \emph{empirical Rademacher complexity} of $G$ for
the sample $S$ is defined by:
\begin{equation}
  \h \R_S(G) = \E_\ssigma \left[\sup_{g \in G}  \frac{1}{T} \sum_{t = 1}^T \sigma_t g(z_t) \right],
\end{equation} 
where $\ssigma = (\sigma_1, \ldots, \sigma_T)^\top$,
with $\sigma_t$s independent uniform random
variables taking values in $\set{-1, +1}$.
The \emph{Rademacher complexity} of $G$ is the
expectation of $\h \R_S(G)$ over all
samples $S = (z_1, \ldots, z_T)$ of size $T$ drawn according to the
product distribution $D = \bigotimes_{t = 1}^TD_t$:
\begin{equation}
\R_T(G) =\E_{S \sim D}[ \h \R_S(G)].
\end{equation}
\end{definition}
Note that this coincides with the standard Rademacher complexity when the
distributions $D_t$, $t \in [1, T]$, all coincide.

A key question for the analysis of learning with a drifting scenario
is a measure of the difference between two distributions $D$ and
$D'$. The distance used by previous authors is the $L_1$
distance. However, the $L_1$ distance is not helpful in this context
since it can be large even in some rather favorable
situations. Moreover, the $L_1$ distance cannot be accurately and
efficiently estimated from finite samples and it ignores the loss
function used. Thus, we will adopt instead the \emph{discrepancy},
which provides a measure of the dissimilarity of two distributions
that takes into consideration both the loss function and the
hypothesis set used, and that is suitable to the
specific scenario of drifting.

Our definition of discrepancy is a generalization to the drifting
context of the one introduced by \cite{MansourMohriRostamizadeh2009} for the analysis of
domain adaptation.  Observe that for a fixed hypothesis $h\in H$, the
quantity of interest with drifting distributions is the difference of
the expected losses $\L_{D'}(h) -\L_D(h)$ for two consecutive
distributions $D$ and $D'$. A natural distance between distributions
in this context is thus one based on the supremum of this quantity
over all $h\in H$.

\begin{definition}
  Given a hypothesis set $H$ and a loss function $L$, the
  $\cY$-\emph{discrepancy} $\dis_\cY$ between two distributions $D$
  and $D'$ over $\cX \times \cY$ is defined by:
\begin{equation}
\dis_\cY(D, D') = \sup_{h \in H} \big| \L_{D'}(h) - \L_{D}(h) \big|.
\end{equation}
\end{definition}
In a deterministic learning scenario with a labeling function
$f$, the previous definition becomes 
\begin{equation}
\dis_\cY(D, D') = \sup_{h \in H} \big| \L_{D'^1}(f, h) - \L_{D^1}(f, h) \big|,
\end{equation}
where $D'^1$ and $D^1$ are the marginal distributions associated to
$D$ and $D'$ defined over $\cX$.  The target function $f$ is unknown
and could match any hypothesis $h'$. This leads to the following
definition \citep{MansourMohriRostamizadeh2009}.
\begin{definition} 
  Given a hypothesis set $H$ and a loss function $L$, the
  \emph{discrepancy} $\dis$ between two distributions $D$ and $D'$
  over $\cX \times \cY$ is defined by:
\begin{equation}
\dis(D, D') = \sup_{h, h' \in H} \big| \L_{D'^1}(h', h) - \L_{D^1}(h', h) \big|.
\end{equation}
\end{definition}
An important advantage of this last definition of discrepancy, in
addition to those already mentioned, is that it can be accurately
estimated from finite samples drawn from $D'^1$ and $D^1$ when the
loss is bounded and the Rademacher complexity of the family of
functions $L_H = \set{x \mapsto L(h'(x), h(x))\colon h, h' \in H}$ is
in $O(1/\sqrt{T})$, where $T$ is the sample size; in particular when $L_H$ has a finite
pseudo-dimension \citep{MansourMohriRostamizadeh2009}.  The discrepancy is by definition
symmetric and verifies the triangle inequality for any loss function
$L$. In general, it does not define a \emph{distance} since we may
have $\dis(D, D') = 0$ for $D' \neq D$.  However, in some cases, for
example for kernel-based hypothesis sets based on a Gaussian kernel,
 the discrepancy has been shown to be a
distance \citep{smooth}.

We will present our learning guarantees in terms of the
$\cY$-discrepancy $\dis_\cY$, that is the most general definition
since guarantees in terms of the discrepancy $\dis$ can be
straightforwardly derived from them. The advantage of the latter
bounds is the fact that the discrepancy can be estimated in that case
from unlabeled finite samples.

We will consider two different scenarios for the analysis of learning
with drifting distributions: the \emph{drifting PAC scenario} and the
\emph{drifting tracking scenario}.

The drifting PAC scenario is a natural extension of the PAC scenario,
where the objective is to select a hypothesis $h$ out of a hypothesis
set $H$ with a small expected loss according to the distribution $D_{T
  + 1}$ after receiving a sample of $T \geq 1$ instances drawn from
the product distribution $\bigotimes_{t = 1}^T D_t$. Thus, the focus in
this scenario is the performance of the hypothesis $h$ with respect to
the environment distribution after receiving the training sample.

The drifting tracking scenario we consider is based on the scenario
originally introduced by \cite{Bartlett1992} for the zero-one loss and
is used to measure the performance of an algorithm $\cA$ (as opposed
to any hypothesis $h$).  In that learning model, the performance of an
algorithm is determined based on its average predictions at each time
for a sequence of distributions. We will generalize its definition by
using the notion of discrepancy and extending it to other loss
functions. The following definitions are the key concepts defining
this model.

\begin{definition}
  For any sample $S = (x_t, y_t)_{t = 1}^T$ of size $T$, we denote by
  $h_{T - 1} \in H$ the hypothesis returned by an algorithm $\cA$
  after receiving the first $T - 1$ examples and by $\h M_T$ its loss
  or mistake on $x_T$: $\h M_T = L(h_{T - 1}(x_T), y_T)$. For a
  product distribution $D = \bigotimes_{t = 1}^TD_t$ on $(\cX \times
  \cY)^T$ we denote by $M_T(D)$ the expected mistake of $\cA$:
 \begin{equation*}
M_T(D) = \E_{S \sim D}[\h M_T] = \E_{S \sim D}[L(h_{T-1}(x_T), y_T)].
\end{equation*}
\end{definition}

\begin{definition} 
\label{def:tracking}
  Let $\Delta>0$ and let $\wt M_T$ be the supremum of $M_T(D)$ over
  all distribution sequences $D = (D_t)$, with $\dis_{\cY}(D_t,
  D_{t+1})<\Delta$. Algorithm $\cA$ is said to
  \emph{$(\Delta,\e)$-track $H$} if there exists $t_0$ such that for
  $T > t_0$ we have $\wt M_T < \inf_{h \in H} \L_{D_T}(h) +\e$.
\end{definition}

An analysis of the tracking scenario with the $L_1$ distance used to
measure the divergence of distributions instead of the discrepancy was
carried out by Long \cite{Long1999} and Barve and Long
\cite{BarveLong1997}, including both upper and lower bounds for $\wt
M_T$ in terms of $\Delta$. Their analysis makes use of an algorithm
very similar to empirical risk minimization, which we will also use in
our theoretical analysis of both scenarios.

\section{Drifting PAC scenario}
\label{sec:pac}

In this section, we present guarantees for the drifting PAC scenario
in terms of the discrepancies of $D_t$ and $D_{T + 1}$ , $t \in [1
,T]$, and the Rademacher complexity of the hypothesis set. We start
with a generalization bound in this scenario and then present a bound
for the agnostic learning setting.

Let us emphasize that learning bounds in the drifting scenario should
of course not be expected to converge to zero as a function of the
sample size but depend instead on the divergence between distributions.

\begin{theorem}
\label{th:generalization}
Assume that the loss function $L$ is bounded by $M$.  Let $D_1,
\ldots, D_{T + 1}$ be a sequence of distributions and let $H_L =
\set{(x,y) \mapsto L(h(x), y)\colon h \in H}$.  Then, for any $\delta >
0$, with probability at least $1 - \delta$, the following holds for
all $h \in H$:
\begin{equation*}
\L_{D_{T + 1}}(h) \leq \frac{1}{T} \sum_{t = 1}^T L(h(x_t), y_t)
+ 2\R_T(H_L) + \frac{1}{T}\sum_{t = 1}^T \dis_\cY(D_t, D_{T + 1}) + M\sqrt{\frac{\log
  \frac{1}{\delta}}{2T}}.
\end{equation*}
\end{theorem}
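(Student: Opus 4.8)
The plan is to reduce the drifting situation to the standard i.i.d. Rademacher bound by inserting the quantity $\frac{1}{T}\sum_{t=1}^T \L_{D_t}(h)$ as an intermediate term. Concretely, for a fixed $h \in H$ write
\[
\L_{D_{T+1}}(h) = \Big(\L_{D_{T+1}}(h) - \frac{1}{T}\sum_{t=1}^T \L_{D_t}(h)\Big) + \frac{1}{T}\sum_{t=1}^T \L_{D_t}(h).
\]
The first parenthesized term is bounded deterministically by $\frac{1}{T}\sum_{t=1}^T \big(\L_{D_{T+1}}(h) - \L_{D_t}(h)\big) \le \frac{1}{T}\sum_{t=1}^T \dis_\cY(D_t, D_{T+1})$, directly from the definition of $\dis_\cY$. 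So it remains to control $\frac{1}{T}\sum_{t=1}^T \L_{D_t}(h)$ uniformly over $h \in H$ in terms of the empirical loss $\frac{1}{T}\sum_{t=1}^T L(h(x_t), y_t)$.

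For that step I would apply the standard uniform-convergence argument to the function class $H_L$, but now with \emph{independent, non-identically distributed} samples $z_t = (x_t, y_t) \sim D_t$. Let $\Phi(S) = \sup_{h \in H}\big(\frac{1}{T}\sum_{t=1}^T \L_{D_t}(h) - \frac{1}{T}\sum_{t=1}^T L(h(x_t), y_t)\big)$. Since each $L$ is bounded by $M$, changing a single $z_t$ changes $\Phi$ by at most $M/T$, so McDiarmid's inequality gives that with probability at least $1-\delta$, $\Phi(S) \le \E_S[\Phi(S)] + M\sqrt{\log(1/\delta)/(2T)}$. The key point is that McDiarmid only needs independence of the coordinates, not identical distribution, so it applies verbatim to the product distribution $D = \bigotimes_{t=1}^T D_t$.

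The remaining piece is the symmetrization bound $\E_S[\Phi(S)] \le 2\R_T(H_L)$. Here $\E_{z_t \sim D_t}[L(h(x_t), y_t)] = \L_{D_t}(h)$, so introducing a ghost sample $z_t' \sim D_t$ and Rademacher variables $\sigma_t$ works exactly as in the i.i.d. case — each term in $\frac{1}{T}\sum_t \sigma_t\big(L(h(x_t'),y_t') - L(h(x_t),y_t)\big)$ is symmetric because $z_t$ and $z_t'$ are i.i.d.\ \emph{with each other} (both drawn from $D_t$), even though across $t$ the distributions differ. This yields $\E_S[\Phi(S)] \le 2\,\E_{S\sim D}[\h\R_S(H_L)] = 2\R_T(H_L)$, matching exactly Definition~\ref{def:EmpiricalRademacher}. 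Combining the three bounds and taking a union-free single application of McDiarmid (the discrepancy term is deterministic, so no union bound is needed) gives the stated inequality.

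The main obstacle — really the only subtle point — is checking that the non-identical distributions do not break either the bounded-differences hypothesis or the symmetrization step; both go through precisely because independence across $t$ and $D_t$-identical-distribution of $z_t$ with its ghost $z_t'$ are all that these arguments use. The bounded loss assumption is what makes the discrepancy term and the concentration term finite, and it is essential that the bound does not vanish as $T\to\infty$ — the $\frac{1}{T}\sum_t \dis_\cY(D_t, D_{T+1})$ term persists, as the remark before the theorem anticipates.
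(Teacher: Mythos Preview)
Your proof is correct and follows essentially the same approach as the paper: both rely on McDiarmid's inequality (using only independence, not identical distribution), symmetrization with ghost samples drawn from each $D_t$, and the definition of $\dis_\cY$ to absorb the drift. The only cosmetic difference is ordering --- you split off the discrepancy term first and then apply McDiarmid to $\sup_h\big(\tfrac{1}{T}\sum_t \L_{D_t}(h)-\tfrac{1}{T}\sum_t L(h(x_t),y_t)\big)$, whereas the paper applies McDiarmid directly to $\sup_h\big(\L_{D_{T+1}}(h)-\tfrac{1}{T}\sum_t L(h(x_t),y_t)\big)$ and splits inside the expectation; the ingredients and the resulting bound are identical.
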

\begin{proof}
  We denote by $D$ the product distribution $\bigotimes_{t = 1}^T D_t$.
  Let $\Phi$ be the function defined over any sample $S = ((x_1, y_1),
  \ldots, (x_T, y_T)) \in (\cX \times \cY)^T$
  by
\begin{equation*}
\Phi(S) = \sup_{h \in H} \L_{D_{T + 1}}(h) - \frac{1}{T} \sum_{t = 1}^T L(h(x_t), y_t).
\end{equation*}
Let $S$ and $S'$ be two samples differing by one labeled point, say $(x_t,
y_t)$ in $S$ and $(x'_t,y'_t)$ in $S'$, then:
\begin{equation*}
\Phi(S') - \Phi(S) \leq \sup_{h \in H} \frac{1}{T} \Big[ L(h(x'_t), y'_t)
-  L(h(x_t), y_t) \Big] \leq \frac{M}{T}.
\end{equation*}
Thus, by McDiarmid's inequality, the following holds:\footnote{Note
  that McDiarmid's inequality does not require points to be drawn
  according to the same distribution but only that they would be drawn
  independently.}
\begin{equation*}
\Pr_{S \sim D}\Big[\Phi(S) - \E_{S \sim D}[\Phi(S)] > \e\Big] \leq \exp(-2T\e^2/M^2).
\end{equation*}
We now bound $\E_{S \sim D}[\Phi(S)]$ by first rewriting it, as follows:
\begin{align*}
& \E\! \Big[ \sup_{h \in H} \L_{D_{T + 1}}(h) -
  \frac{1}{T} \sum_{t = 1}^T \L_{D_{t}}(h) + \frac{1}{T} \sum_{t =
    1}^T \L_{D_{t}}(h) -
  \frac{1}{T} \sum_{t = 1}^T L(h(x_t), y_t) \Big]&\\
& \leq \!\E\!\Big[ \sup_{h \in H} \L_{D_{T + 1}}(h) \!-\!
  \frac{1}{T} \sum_{t = 1}^T \L_{D_{t}}(h) \Big] 
\!+\! \E\!\Big[\sup_{h \in H}  \frac{1}{T} \sum_{t =
    1}^T \L_{D_{t}}(h) \!-\!
  \frac{1}{T} \sum_{t = 1}^T L(h(x_t), y_t) \Big]\\
& \leq \!\E\! \Big[ \frac{1}{T} \sum_{t = 1}^T \sup_{h \in H} \big( \L_{D_{T + 1}}(h) -
  \L_{D_{t}}(h)  \big) \!+\! \sup_{h \in H}  \frac{1}{T} \sum_{t =
    1}^T \big( \L_{D_{t}}(h) - L(h(x_t), y_t) \big) \Big]\\
& \leq \frac{1}{T} \sum_{t = 1}^T \dis_\cY(D_t, D_{T + 1}) + \E\! \Big[ \sup_{h \in H}  \frac{1}{T} \sum_{t =
    1}^T \big( \L_{D_{t}}(h) - L(h(x_t), y_t) \big) \Big].
\end{align*}
It is not hard to see, using a symmetrization argument as in
the non-sequential case, that the second term can be bounded by
$2\R_T(H_L)$. \qed
\end{proof}

For many commonly used loss functions, the empirical Rademacher
complexity $\R_T(H_L)$ can be upper bounded in terms of that of the
function class $H$. In particular, for the zero-one loss it is known
that $\R_T(H_L) = \R_T(H)/2$ and when $L$ is the $L_q$ loss for some
$q \geq 1$, that is $L(y, y') = |y' - y|^q$ for all $y, y' \in \cY$,
then $\R_T(H_L) \leq q M^{q - 1} \R_T(H)$. Indeed, since $x \mapsto
|x|^q$ is $q M^{q - 1}$-Lipschitz over $[-M, +M]$, by Talagrand's
contraction lemma, $\R_T(H_L)$ is bounded by $q M^{q - 1} \h \R_T(G)$
with \linebreak $G = \set{(x, y) \mapsto (h(x) - y) \colon h \in H}$.
Furthermore, $\h \R_T(G)$ can be analyzed as follows:
\begin{multline*}
\h \R_T(G)  = \frac{1}{T} \E_\ssigma \bigg[
\sup_{h \in H} \sum_{t = 1}^T \sigma_t (h(x_t) - y_t) \bigg] \\
\begin{aligned}
& = \frac{1}{T} \E_\ssigma \bigg[ \sup_{h \in H} \sum_{t = 1}^T \sigma_t
h(x_t) \bigg] + \frac{1}{T}\E_\ssigma \bigg[ \sum_{t = 1}^T -\sigma_t
y_t \bigg] 
= \h \R_T(H),
\end{aligned}
\end{multline*}
since $\E_\ssigma [ \sum_{t = 1}^T -\sigma_t
y_t]  = 0$. Taking the
expectation of both sides yields a similar inequality for Rademacher
complexities.  Thus, in the statement of the previous theorem,
$\R_T(H_L)$ can be replaced with $q M^{q - 1} \R_T(H)$ when $L$ is the $L_q$
loss.

Observe that the bound of Theorem~\ref{th:generalization} is tight
as a function of the divergence measure (discrepancy) we are using.
Consider for example the case where $D_1 = \ldots = D_T$, then
a standard Rademacher complexity generalization bound holds for
all $h \in H$:
\begin{equation*}
  \L_{D_T}(h) \leq \frac{1}{T} \sum_{t  = 1}^T L(h(x_t), y_t) + 2 \R_T(H_L) + O(1/\sqrt{T}).
\end{equation*}
Now, our generalization bound for $\L_{D_{T + 1}}(h)$ includes only
the additive term \linebreak $\dis_\cY(D_t, D_{T + 1})$, but by definition of the
discrepancy, for any $\e > 0$, there exists $h \in H$ such that the
inequality $|\L_{D_{T + 1}}(h) - \L_{D_T}(h)| < \dis_\cY(D_t, D_{T + 1}) +
\e$ holds.
      
Next, we present PAC learning bounds for empirical risk minimization.
Let $h_T^*$ be a best-in class hypothesis in $H$, that is one with the
best expected loss. By a similar reasoning as in theorem
\ref{th:generalization}, we can show that with probability $1 -
\frac{\delta}{2}$ we have
\begin{equation*}
\frac{1}{T} \!\!\sum_{t=1}^T \! L(h^*_T(x_t),y_t) \!\leq\! \L_{D_{T+1}}(h^*_T) +
2\R_T(H_L) +
\frac{1}{T} \!\sum_{t = 1}^T \!\!\dis_\cY(D_t,D_{T+1}) + 2M\sqrt{\frac{\log \frac{2}{\delta}}{2T}}.
\end{equation*}
Let $h_T$ be a hypothesis returned by empirical risk minimization
(ERM).  Combining this inequality with the bound of theorem
\ref{th:generalization} while using the definition of $h_T$ and using
the union bound, we obtain that with probability $1 - \delta$ the
following holds:
\begin{equation}
\label{agnosticineq}
\L_{D_{T+1}}(h_T)-\L_{D_{T+1}}(h_T^*)\leq 4\R_T(H_L) 
+ \frac{2}{T}\sum_{t =
  1}^T\dis_\cY(D_t,D_{T+1})
+ 2M\sqrt{\frac{\log \frac{2}{\delta}}{2T}}.
\end{equation}
This learning bound indicates a trade-off: larger values of the sample
size $T$ guarantee smaller first and third terms; however, as $T$
increases, the average discrepancy term is likely to grow as well,
thereby making learning increasingly challenging. This suggests an
algorithm similar to empirical risk minimization but limited to the
last $m$ examples instead of the whole sample with $m < T$. This
algorithm was previously used in \cite{BarveLong1997} for the study of
the tracking scenario. We will use it here to prove several theoretical
guarantees in the PAC learning model.

\begin{proposition}
\label{prop:8}
Let $\Delta \geq 0$. Assume that $(D_t)_{t \geq 0}$ is a sequence of
distributions such that $\dis_\cY(D_t,D_{t+1})\leq \Delta$ for all $t
\geq 0$. Fix $m \geq 1$ and let $h_T$ denote the hypothesis returned
by the algorithm $\cA$ that minimizes $\sum_{t=T-m}^T L(h(x_t),y_t)$
after receiving $T > m$ examples. Then, for any $\delta > 0$, with
probability at least $1 - \delta$, the following learning bound holds:
\begin{equation}
\label{eq:agnosticbound}
     \L_{D_{T+1}}(h_T) - \inf_{h \in H} \L_{D_{T+1}}(h) 
\leq 4\R_{m}(H_L) + (m+1)\Delta + 2M\sqrt{\frac{\log \frac{2}{\delta}}{2m}}.
 \end{equation}
\end{proposition}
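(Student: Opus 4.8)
The plan is to derive this from the agnostic ERM bound obtained in Section~\ref{sec:pac} just before inequality~(\ref{agnosticineq}), applied not to the whole sample but to the block $B=\{T-m+1,\dots,T\}$ of indices of the most recent examples — the ones $\cA$ actually uses — and with $D_{T+1}$ playing the role of the target distribution. Three remarks make this reduction clean. First, the examples $(x_t,y_t)_{t\in B}$ are mutually independent (drawn from $D_{T-m+1},\dots,D_T$), and both McDiarmid's inequality and the symmetrization step behind Theorem~\ref{th:generalization} use only independence, not identical distributions — this is precisely what the footnote to that theorem observes, and the sequential Rademacher complexity of Definition~\ref{def:EmpiricalRademacher} is defined exactly for such product, possibly non-identical, distributions, so ``$\R_m(H_L)$'' makes sense here. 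Second, $\cA$ returns $h_T=\argmin_{h\in H}\sum_{t\in B}L(h(x_t),y_t)$, i.e.\ $h_T$ is the ERM hypothesis on the block, which is what the derivation of~(\ref{agnosticineq}) requires. Third, if $\inf_{h\in H}\L_{D_{T+1}}(h)$ is not attained, one replaces the best-in-class hypothesis by an $\e$-optimal $h^\star$ and lets $\e\to0$ at the end. Carrying out the derivation of~(\ref{agnosticineq}) verbatim with $T$ replaced by $m$ and the whole-sample average replaced by the block average then gives, with probability at least $1-\delta$, $\L_{D_{T+1}}(h_T)-\inf_{h\in H}\L_{D_{T+1}}(h)\le 4\R_m(H_L)+\frac{2}{m}\sum_{t\in B}\dis_\cY(D_t,D_{T+1})+2M\sqrt{\log(2/\delta)/(2m)}$.

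The only step that uses the bounded-drift hypothesis, and essentially the only content specific to the proposition, is bounding the averaged discrepancy term. By the triangle inequality for the discrepancy noted in Section~\ref{sec:prelim}, for each $t\in B$ we have $\dis_\cY(D_t,D_{T+1})\le\sum_{s=t}^{T}\dis_\cY(D_s,D_{s+1})\le(T+1-t)\Delta$. Summing over $t=T-m+1,\dots,T$, the quantities $T+1-t$ range over $1,\dots,m$, so $\frac{2}{m}\sum_{t\in B}\dis_\cY(D_t,D_{T+1})\le\frac{2\Delta}{m}\sum_{j=1}^{m}j=(m+1)\Delta$. Substituting into the bound above yields~(\ref{eq:agnosticbound}). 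It is worth noting that using the graded estimate $(T+1-t)\Delta$ here is what produces the $(m+1)\Delta$ term; the cruder uniform bound $\dis_\cY(D_t,D_{T+1})\le m\Delta$ would only give $2m\Delta$.

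I do not expect a genuine obstacle here. Once one recognizes that the whole argument behind~(\ref{agnosticineq}) transfers to any contiguous block of independent examples, the proof is a re-indexing plus the elementary sum of an arithmetic series. The only things to watch are bookkeeping about the exact size of the block (the statement's index range $t=T-m,\dots,T$ versus the $m$ examples effectively controlled, a harmless off-by-one that only strengthens the sample-size-dependent terms) and keeping track of which of the two one-sided estimates contributes the $2\R_m(H_L)$ and which could in fact be done with a plain Hoeffding bound — neither affects the stated inequality.
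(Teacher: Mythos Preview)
Your proposal is correct and follows essentially the same route as the paper: apply inequality~(\ref{agnosticineq}) to the block of the last $m$ examples, then bound the averaged discrepancy via the triangle inequality. You are in fact more careful than the paper's own proof, which writes the per-term bound as $(T+1-m)\Delta$ (an apparent typo for $(T+1-t)\Delta$) and glosses over the off-by-one in the index range; your graded estimate $\dis_\cY(D_t,D_{T+1})\le(T+1-t)\Delta$ summed over $j=1,\dots,m$ is exactly what yields the stated $(m+1)\Delta$.
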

\begin{proof}	
  The proof is straightforward. Notice that the algorithm discards the
  first $T - m$ examples and considers exactly $m$ instances. Thus, as
  in inequality \ref{agnosticineq}, we have:
\begin{equation*}
\L_{D_{T+1}}(h_T)-\L_{D_{T+1}}(h_T^*)\leq 4\R_{m}(H_L) 
+ \frac{2}{m}\sum_{t=T-m}^T \dis(D_t,D_{T+1}) + 2M\sqrt{\frac{\log
    \frac{2}{\delta}}{2m}}.
\end{equation*}
Now, we can use the triangle inequality to bound $\dis (D_t, D_{T + 1})$ by
 $(T + 1 - m) \Delta$. Thus, the sum of the discrepancy terms can be
 bounded by $(m + 1)\Delta$.
\qed
\end{proof}
To obtain the best learning guarantee, we can select $m$ to minimize
the bound just presented. This requires the expression of the
Rademacher complexity in terms of $m$. The following is the result
obtained when using a VC-dimension upper bound of
$O(\sqrt{d/m})$ for the Rademacher complexity.

\begin{corollary}
\label{cor:cubicpac}
Fix $\Delta > 0$. Let $H$ be a hypothesis set with VC-dimension $d$
such that for all $m \geq 1$, $\R_m(H_L)\leq
\frac{C}{4}\sqrt{\frac{d}{m}}$ for some constant $C > 0$.  Assume that
$(D_t)_{t>0}$ is a sequence of distributions such that
$\dis_{\cY}(D_t,D_{t+1})\leq \Delta$ for all $t \geq 0$. Then, there
exists an algorithm $\cA$ such that for any $\delta > 0$, the
hypothesis $h_T$ it returns after receiving $T > \Big[\frac{C + C'}{2}\Big]^{\frac 2 3}
(\frac{d}{\Delta^2})^{\frac 1 3}$ instances, where $C' = 2M
    \sqrt{\frac{\log(\frac{2}{\delta})}{2d}}$, satisfies the following
with probability at least $1 - \delta$:
\begin{equation}
\label{eq:9}
\L_{D_{T+1}}(h_T) - \inf_{h \in H} \L_{D_{T+1}}(h)  \leq 3 \left[\frac{C + C'}{2}\right]^{2/3}(d\Delta)^{1/3}
+ \Delta.
\end{equation}
\end{corollary}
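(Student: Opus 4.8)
The plan is to optimize the bound of Proposition~\ref{prop:8} over the window size $m$. By assumption $\R_m(H_L) \leq \frac{C}{4}\sqrt{d/m}$, so substituting into \eqref{eq:agnosticbound} and also bounding the last term $2M\sqrt{\log(2/\delta)/(2m)} = C'\sqrt{d/m}$ (using the definition of $C'$), we get that with probability at least $1-\delta$,
\begin{equation*}
\L_{D_{T+1}}(h_T) - \inf_{h\in H}\L_{D_{T+1}}(h) \leq (C + C')\sqrt{\frac{d}{m}} + (m+1)\Delta.
\end{equation*}
So it suffices to choose $m$ minimizing $f(m) = (C+C')\sqrt{d/m} + m\Delta$ (the extra $\Delta$ from the ``$+1$'' will be carried along and show up as the additive $\Delta$ in \eqref{eq:9}).

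Next I would carry out the elementary one-variable optimization. Setting $f'(m) = 0$ gives $-\tfrac12 (C+C')\sqrt{d}\, m^{-3/2} + \Delta = 0$, i.e. $m = \big[\tfrac{C+C'}{2}\big]^{2/3}(d/\Delta^2)^{1/3}$. Plugging this value back in, both terms $(C+C')\sqrt{d/m}$ and $m\Delta$ become equal to $\big[\tfrac{C+C'}{2}\big]^{2/3}(d\Delta^2)^{1/3}\cdot(\text{const})$; a short computation shows $(C+C')\sqrt{d/m} + m\Delta = 3\big[\tfrac{C+C'}{2}\big]^{2/3}(d\Delta)^{1/3}$ at the optimum. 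Adding back the leftover $\Delta$ term yields exactly the right-hand side of \eqref{eq:9}. The constraint $T > m$ needed for Proposition~\ref{prop:8} to apply is precisely the stated hypothesis $T > \big[\tfrac{C+C'}{2}\big]^{2/3}(d/\Delta^2)^{1/3}$, so the algorithm $\cA$ is the windowed ERM of Proposition~\ref{prop:8} with this choice of $m$.

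One technical wrinkle to handle: the optimal $m$ need not be an integer, so strictly one should take $m = \lceil \big[\tfrac{C+C'}{2}\big]^{2/3}(d/\Delta^2)^{1/3}\rceil$ (or the floor) and check that rounding only perturbs the bound by lower-order terms absorbed into the constants — this is routine and I would mention it only briefly. The only mild ``obstacle'' is bookkeeping the constants so that the final expression matches \eqref{eq:9} exactly, in particular correctly splitting $(m+1)\Delta = m\Delta + \Delta$ and verifying that at the stationary point the first term and $m\Delta$ coincide, giving the factor $3$ rather than $2$. Beyond that the argument is a direct substitution-and-minimize, so I expect no real difficulty.
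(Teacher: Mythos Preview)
Your proposal is correct and follows exactly the paper's own approach: substitute the Rademacher bound and the definition of $C'$ into \eqref{eq:agnosticbound} to get $(C+C')\sqrt{d/m}+(m+1)\Delta$, then choose $m=\big[\tfrac{C+C'}{2}\big]^{2/3}(d/\Delta^2)^{1/3}$ to minimize. One tiny remark: at the stationary point the two terms do not actually coincide---$(C+C')\sqrt{d/m}=2m\Delta$---so the factor $3$ arises as $2+1$; your final expression is nevertheless correct, and your observation about integer rounding is a point the paper simply glosses over.
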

\begin{proof*}
  Fix $\delta > 0$. Replacing $\R_m(H_L)$
  by the upper bound $\frac{C}{4}\sqrt{\frac{d}{m}}$ in
  \eqref{eq:agnosticbound} yields
\begin{equation*}
  \L_{D_{T+1}}(h_T) - \inf_{h \in H} \L_{D_{T+1}}(h) \leq (C + C') \sqrt{\frac{d}{m}} + (m + 1) \Delta.
\end{equation*}
Choosing $m = (\frac{C + C'}{2})^{\frac 2 3}
(\frac{d}{\Delta^2})^{\frac 1 3} $ to minimize the right-hand side
gives exactly \eqref{eq:9}. \qed
\end{proof*}
When $H$ has finite VC-dimension $d$, it is known that $\R_m(H_L)$
can be bounded by $C\sqrt{d/m}$ for some constant $C > 0$, by using  
a chaining argument \citep{Dudley84,Pollard84,Talagrand2005}. 
Thus, the assumption of the corollary holds for many loss functions
$L$, when $H$ has finite VC-dimension.

\section{Drifting Tracking scenario}
\label{sec:tracking}

In this section, we present a simpler proof of the bounds given by
\cite{Long1999} for the agnostic case demonstrating that using the
discrepancy as a measure of the divergence between distributions leads
to tighter and more informative bounds than using the $L_1$ distance.

\begin{proposition}

  Let $\Delta>0$ and let $(D_t)_{t \geq 0}$ be a sequence
  of distributions such that $\dis_\cY(D_t,D_{t+1})\leq \Delta$ for
  all $t \geq 0$. Let $m>1$ and let $h_T$ be as in proposition
 \ref{prop:8}. Then,
 \begin{equation}
 \label{eq:trackingrademacher}
  \E_D[\h M_{T+1}] - \inf_h\L_{D_{T+1}}(h)
\leq 4\R_{m}(H_L) + 2M\sqrt{\frac{\pi}{m}} + (m + 1)\Delta.
 \end{equation}
\end{proposition}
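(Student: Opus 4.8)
The plan is to reduce \eqref{eq:trackingrademacher} to the high-probability guarantee of Proposition~\ref{prop:8} and then pass to the expectation by integrating its tail. First I would record that the hypothesis $h_T$ returned by $\cA$ depends only on the first $T$ examples (in fact only on the last window $(x_t,y_t)_{t=T-m}^{T}$), which are independent of $(x_{T+1},y_{T+1})\sim D_{T+1}$. Conditioning on the first $T$ points and taking the inner expectation therefore gives
\[
\E_D[\h M_{T+1}] = \E_D\big[L(h_T(x_{T+1}),y_{T+1})\big] = \E_D\big[\L_{D_{T+1}}(h_T)\big],
\]
so it suffices to bound $\E_D[Z]$, where $Z = \L_{D_{T+1}}(h_T) - \inf_{h\in H}\L_{D_{T+1}}(h)\ge 0$.

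Next I would revisit the proof of Proposition~\ref{prop:8}: it rests on two applications of McDiarmid's inequality, so the error term there is genuinely sub-Gaussian. Writing $W = Z - 4\R_m(H_L) - (m+1)\Delta$, the conclusion of Proposition~\ref{prop:8} — with probability at least $1-\delta$, $Z \le 4\R_m(H_L) + (m+1)\Delta + 2M\sqrt{\log(2/\delta)/(2m)}$ — becomes, after the change of variable $\delta = 2\exp(-mu^2/(2M^2))$, the tail bound $\Pr[W>u] \le 2\exp(-mu^2/(2M^2))$ for all $u\ge M\sqrt{2\log 2/m}$; for smaller $u$ the right-hand side is already at least $1$, so this holds for every $u\ge 0$.

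It then remains to integrate: using $\E[W]\le\int_0^\infty \Pr[W>u]\,du$ together with the standard Gaussian integral,
\[
\E[W] \le \int_0^\infty 2\exp\!\Big(-\frac{mu^2}{2M^2}\Big)\,du = M\sqrt{\frac{2\pi}{m}} \le 2M\sqrt{\frac{\pi}{m}}.
\]
Adding back the deterministic terms yields $\E_D[\h M_{T+1}] - \inf_{h}\L_{D_{T+1}}(h) = \E_D[Z] = \E[W] + 4\R_m(H_L) + (m+1)\Delta \le 4\R_m(H_L) + 2M\sqrt{\pi/m} + (m+1)\Delta$, which is \eqref{eq:trackingrademacher}.

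The main obstacle is exactly this passage from the high-probability statement of Proposition~\ref{prop:8} to the in-expectation bound: one has to verify that its error term really has the sub-Gaussian form $2\exp(-mu^2/(2M^2))$ and that this bound is valid across the whole range $u\ge 0$ (the low-$u$ part being trivial), and then evaluate $\int_0^\infty 2\exp(-mu^2/(2M^2))\,du$ and compare it with $2M\sqrt{\pi/m}$. The independence argument replacing $\h M_{T+1}$ by $\L_{D_{T+1}}(h_T)$ and the bookkeeping of the deterministic terms $4\R_m(H_L)$ and $(m+1)\Delta$ are routine.
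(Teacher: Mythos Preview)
Your proposal is correct and follows essentially the same route as the paper: reduce $\E_D[\h M_{T+1}]$ to $\E_{D'}[\L_{D_{T+1}}(h_T)]$ via Fubini/conditioning, read Proposition~\ref{prop:8} as a sub-Gaussian tail bound on $W=Z-4\R_m(H_L)-(m+1)\Delta$, and integrate the tail. The only cosmetic difference is that the paper performs the integration via the change of variable $\delta=\phi(\beta)$ rather than evaluating the Gaussian integral directly; your computation in fact gives the slightly sharper constant $M\sqrt{2\pi/m}\le 2M\sqrt{\pi/m}$.
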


\begin{proof}
  Let $D = \bigotimes_{t = 1}^{T + 1}D_t$ and $D'=\bigotimes_{t = 1}^{T} D_t$. 
  By Fubini's theorem we can write:
\begin{equation}
\label{eq:11}
  \E_D[\h M_{T+1}] - \inf_h\L_{D_{T+1}}(h) =
  \E_{D'}\Big[\L_{D_{T+1}}(h_T) - \inf_h\L_{D_{T+1}}(h)\Big].
\end{equation} 
Now, let $\phi^{-1}(\delta)=4\R_{m}(H_L)+(m+1)\Delta+2M\sqrt{\frac{\log \frac{2}{\delta}}{2m}}$,
then, by \eqref{eq:agnosticbound}, for \linebreak $\beta>4\R_{m}(h)+(m+1)\Delta$, the
following holds:
\begin{equation*}
\Pr_{D'}[\L_{D_{T+1}}(h_T)-\inf_h\L_{D_{T+1}}(h)>\beta] < \phi(\beta).
\end{equation*}
Thus, the expectation on the right-hand side of \eqref{eq:11} can be
bounded as follows:
\begin{multline*}
\E_{D'}\Big[\L_{D_{T+1}}(h_T) - \inf_h\L_{D_{T+1}}(h)\Big]
\leq 4\R_{m}(H_L) + (m + 1)\Delta + \int_{4\R_{m}(H_L)+(m+1)\Delta}^\infty
\mspace{-40mu} \phi(\beta) d\beta.
\end{multline*}
The last integral can be rewritten as $2M \int_0^2\frac{d\delta}{\sqrt{m\log
    \frac{2}{\delta}}} = 2M\sqrt{\frac{\pi}{m}}$ using the change of variable
$\delta = \phi(\beta)$. This concludes the proof.\qed
\end{proof}
The following corollary can be shown using the same proof as that of
corollary~\ref{cor:cubicpac}.
\begin{corollary}
  Fix $\Delta > 0$. Let $H$ be a hypothesis set with VC-dimension $d$
  such that for all $m > 1$, $4\R_m(H_L)\leq C\sqrt{\frac{d}{m}}$. Let
  $(D_t)_{t>0}$ be a sequence of distributions over $\cX \times \cY$
  such that $\dis_{\cY}(D_t,D_{t+1})\leq \Delta$. Let $C' =
  2M\sqrt{\frac{\pi}{d}}$ and $K = 3 \left[\frac{C +
      C'}{2}\right]^{2/3}$.  Then, for $T > \big[\frac{C +
    C'}{2}\big]^{\frac 2 3} (\frac{d}{\Delta^2})^{\frac 1 3}$,
  the following inequality holds:
\begin{equation*}
\E_D[\h M_{T+1}] - \inf_h\L_{D_{T+1}}(h) < K(d\Delta)^{1/3} + \Delta.
\end{equation*}
\end{corollary}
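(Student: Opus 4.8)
The plan is to follow the proof of Corollary~\ref{cor:cubicpac} essentially verbatim, only replacing its starting point \eqref{eq:agnosticbound} by the tracking bound \eqref{eq:trackingrademacher} of the proposition just established. First I would substitute the VC-type estimate $4\R_m(H_L)\le C\sqrt{d/m}$ into \eqref{eq:trackingrademacher} and observe that the second term there, $2M\sqrt{\pi/m}$, is exactly $C'\sqrt{d/m}$ by the very definition $C'=2M\sqrt{\pi/d}$. This collapses the right-hand side to
\begin{equation*}
\E_D[\h M_{T+1}] - \inf_h\L_{D_{T+1}}(h)\le (C+C')\sqrt{\frac{d}{m}} + (m+1)\Delta,
\end{equation*}
which has precisely the same form, with the same constants $C$ and $C'$, as the intermediate inequality appearing in the proof of Corollary~\ref{cor:cubicpac}.

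Next I would minimize the $m$-dependent part $g(m) = (C+C')\sqrt{d}\,m^{-1/2} + m\Delta$ over $m$. Solving $g'(m)=0$ gives the minimizer $m^\star = \left(\frac{C+C'}{2}\right)^{2/3}\left(\frac{d}{\Delta^2}\right)^{1/3}$, and substituting $m=m^\star$ one finds $(C+C')\sqrt{d}\,(m^\star)^{-1/2} = 2\left(\frac{C+C'}{2}\right)^{2/3}(d\Delta)^{1/3}$ and $m^\star\Delta = \left(\frac{C+C'}{2}\right)^{2/3}(d\Delta)^{1/3}$. Adding these yields $3\left(\frac{C+C'}{2}\right)^{2/3}(d\Delta)^{1/3} = K(d\Delta)^{1/3}$, and the leftover $+\,\Delta$ coming from $(m+1)\Delta$ accounts for the additive $\Delta$ in the statement. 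The hypothesis $T > \left(\frac{C+C'}{2}\right)^{2/3}(d/\Delta^2)^{1/3} = m^\star$ is exactly the condition $T>m$ needed so that the windowed algorithm of Proposition~\ref{prop:8} (hence the bound \eqref{eq:trackingrademacher}) may legitimately be instantiated with window $m=m^\star$; this is why that particular threshold on $T$ shows up.

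I do not anticipate a genuine obstacle: this is a one-variable optimization identical in form to the one already carried out for Corollary~\ref{cor:cubicpac}. The only mild bookkeeping point — and the source of the strict inequality in the statement — is that $m$ must be a positive integer with $m<T$, so one instantiates the window at a feasible integer near $m^\star$; the strict hypothesis $T>m^\star$ guarantees such a choice exists, and, as in the proof of the preceding proposition, this leaves enough slack for the final bound to hold strictly, giving $\E_D[\h M_{T+1}] - \inf_h\L_{D_{T+1}}(h) < K(d\Delta)^{1/3} + \Delta$. All remaining ingredients — in particular the chaining estimate $4\R_m(H_L)\le C\sqrt{d/m}$ for $H$ of finite VC-dimension — are taken as given, exactly as in the discussion following Corollary~\ref{cor:cubicpac}.
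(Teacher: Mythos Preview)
Your proposal is correct and is exactly the argument the paper has in mind: it states only that the corollary ``can be shown using the same proof as that of Corollary~\ref{cor:cubicpac},'' i.e., insert the Rademacher bound $4\R_m(H_L)\le C\sqrt{d/m}$ into \eqref{eq:trackingrademacher}, recognize $2M\sqrt{\pi/m}=C'\sqrt{d/m}$, and optimize in $m$ precisely as you do. Your handling of the integer window and of the condition $T>m^\star$ is in fact more careful than the paper's one-line pointer.
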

In terms of definition \ref{def:tracking}, this corollary shows 
that algorithm $\cA$  $(\Delta,K(d\Delta)^{1/3} + \Delta)$-tracks $H$. 
This result is similar to a result of \cite{Long1999} which states that
given $\e > 0$ if $\Delta = O(d\epsilon^3)$ then $\cA$ 
$(\Delta,\epsilon)$-tracks $H$. However, in \cite{Long1999}, 
$\Delta$ is an upper bound on the $L_1$ distance and not
the discrepancy.
\ignore{Of course for $h_T$ to make sense we need $T > m$,
 but since definition \ref{def:tracking}
only requires the condition to be satisfied for big enough
$T$ we have no problems with assuming $T>m$.}
Our result provides thus a tighter and more general guarantee than
that of \citep{Long1999}, the latter because this result is applicable
to any loss function and not only the zero-one loss, the former because
our bound is based on the Rademacher complexity instead of the
VC-dimension and more importantly because it is based on the
discrepancy, which is a finer measure of the divergence between
distributions than the $L_1$ distance. Indeed, for any $t \in [1, T]$,
\begin{align*}
\dis_\cY(D_t, D_{t + 1}) 
& = \sup_{h \in H} \big| \L_{D_t}(h) - \L_{D_{t + 1}}(h)
\big|\\
& = \sup_{h \in H} \big| \sum_{x, y} (D_t(x, y) - D_{t + 1}(x, y))
L(h(x), y) \big|
\big|\\
& \leq M \sup_{h \in H} \sum_{x, y} |D_t(x,y) - D_{t + 1}(x, y)| = M L_1(D_t,
D_{t + 1}).
\end{align*}
Furthermore, when the target function $f$ is in $H$, then the
$\cY$-discrepancies can be bounded by the discrepancies $\dis(D_t,
D_{T + 1})$, which, unlike the $L_1$ distance, can be accurately
estimated from finite samples.

It is important to emphasize that even though our analysis was based
on a particular algorithm, that of ``truncated'' empirical risk
minimization, the bounds obtained here cannot be improved upon in the
general scenario of drifting distributions, as shown by
\cite{BarveLong1997} in the case of binary classification.

\begin{figure}[t]
\begin{center}
\includegraphics[scale=.33]{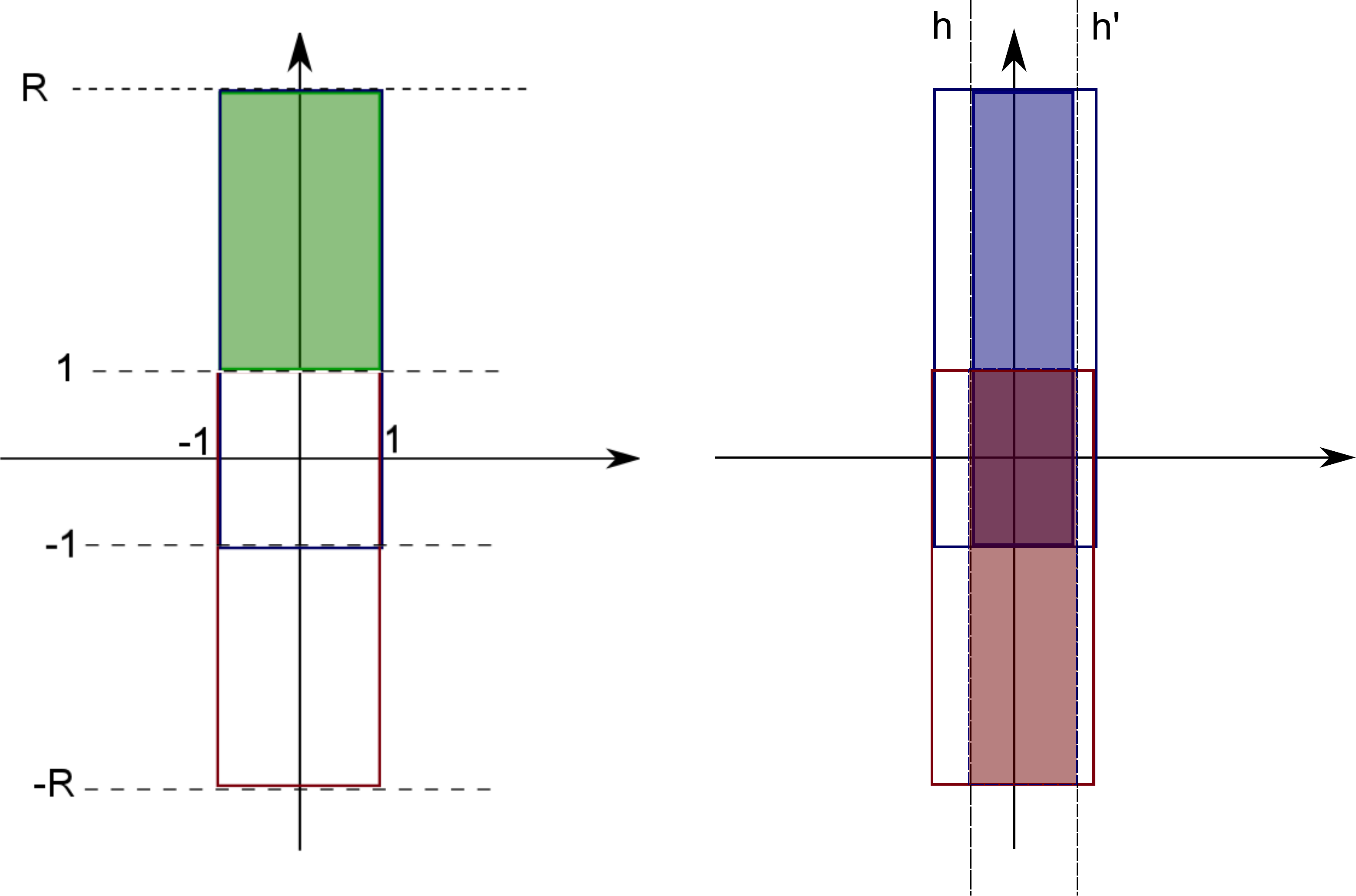}
\end{center}
\vspace{-.75cm}
\caption{Figure depicting the difference between the $L_1$ distance
  and the discrepancy. In the left figure, the $L_1$ distance is given
  by twice the area of the green rectangle. In the right figure, $P(h(x)
  \neq h'(x))$ is equal to the area of the blue rectangle and $Q(h(x)
  \neq h'(x))$ is the area of the red rectangle.  The two areas are
  equal, thus $\dis(P, Q) = 0$.}
\vspace{-.5cm}
\label{fig:discrepancy}
\end{figure}

We now illustrate the difference between the guarantees we present and
those based on the $L_1$ distance by presenting a simple example for the
zero-one loss where the $L_1$ distance can be made arbitrarily close
to 2 while the discrepancy is 0. In that case, our bounds state that
the learning problem is as favorable as in the absence of any
drifting, while a learning bound with the $L_1$ distance would be
uninformative. Consider measures $P$ and $Q$ in $\mathbb{R}^2.$ Where
$P$ is uniform in the rectangle $R_1$ defined by the vertices
$(-1,R),\,(1,R),\, (1,-1),\,(-1,-1)$ and $Q$ is uniform in the
rectangle $R_2$ spanned by $(-1,-R),\,(1,-R),\, (-1,1),\,(1,1)$. The
measures are depicted in figure \ref{fig:discrepancy}. The $L_1$
distance of these probability measures is given by twice the
difference of measure in the green rectangle, i.e, $|P - Q| =
2\frac{(R-1)}{R+1}$ this distance goes to $2$ as $R\rightarrow
\infty$. On the other hand consider the zero-one loss and the
hypothesis set consisting of threshold functions on the first
coordinate, i.e. $h(x, y) = 1$ iff $h < x$. For any two hypotheses
$h<h'$ the area of disagreement of this two hypotheses is given by the
stripe $S = \set{x\colon h < x <h'}$.  But it is trivial to see that
$P(S) = P(S \cap R_1) = (h - h')/2$, but also $Q(S) = Q(S \cap R_2) =
(h - h')/2$, since this is true for any pair of hypotheses we conclude
that $\dis(P, Q) = 0$. This example shows that the learning bounds we
presented can be dramatically more favorable than those given in the
past using the $L_1$ distance.

Although this may be viewed as a trivial illustrative example, the
discrepancy and the $L_1$ distance can greatly differ in more complex
but realistic cases.\ignore{ In the previous example this happened
  because the hypothesis set just depended on one coordinate, but
  similar results could be obtained when there are features that
  aren't really significant for finding a good hypothesis.}

\section{On-line to batch conversion}
\label{sec:online}

In this section, we present learning guarantees for drifting
distributions in terms of the regret of an on-line learning algorithm
$\cA$. The algorithm processes a sample $(x_t)_{t \geq 1}$
sequentially by receiving a sample point $x_t \in \cX$, generating a
hypothesis $h_t$, and incurring a loss $L(h(x_t), y_t)$, with $y_t \in
\cY$.  We denote by $R_T$ the regret of algorithm $\cA$ after
processing $T \geq 1$ sample points:
\begin{equation*}
R_T = \sum_{t = 1}^T L(h(x_t), y_t) - \inf_{h \in H} \sum_{t = 1}^T
L(h(x_t), y_t).
\end{equation*}
The standard setting of on-line learning assumes an adversarial
scenario with no distributional assumption. Nevertheless, when the
data is generated according to some distribution, the hypotheses
returned by an on-line algorithm $\cA$ can be combined to define a
hypothesis with strong learning guarantees in the distributional
setting when the regret $R_T$ is in $O(\sqrt{T})$ (which is attainable
by several regret minimization algorithms)
\citep{Littlestone,Cesa-BianchiConconiGentile2001}. Here, we extend
these results to the drifting scenario and the case of a convex
combination of the hypotheses generated by the algorithm.
The following lemma will be needed for the proof of our main result.

\ignore{
\begin{lemma}
\label{lm:onlineiid}
Let $(h_t)_{t = 1}^T$ be as in the previous definition and assume that
the sample $(x_t, y_t) _{t = 1}^T$ is drawn i.i.d.\ form some
distribution $D$. Then, if the loss function $L$ is bounded by $M$ and
is convex with respect to its first argument, then, for any $\delta >
0$, with probability at least $1 - \delta$, each of the following
learning guarantees holds for the hypothesis $h = \frac{1}{T}\sum_{t =
  1}^T h_t$:
\begin{align}
\label{1giid}
& \L_D(h)\leq \frac{1}{T}\sum_{t = 1}^T L(h_t(x_t),y_t) +M\sqrt{\frac{2\log\frac{1}{\delta}}T}\\
\label{2giid}
& \L_D(h)\leq \inf_{h'\in H}\L(h') +\frac{R_T}{T}+3M\sqrt{\frac{\log\frac{2}{\delta}}{2T}}.
\end{align}
\end{lemma} 
When $R_T = O(\sqrt{T})$, the bound \eqref{2giid} is similar to the
learning guarantees known for PAC learning. We will prove that
similars result to that of theorem~\ref{th:generalization} for the
setting of drifting distributions.  
}

\begin{lemma}
\label{lemma:13}
  Let $\S = (x_t, y_t)_{t = 1}^T$ be a sample drawn from the
  distribution $D = \bigotimes D_t$ and let $(h_t)_{t = 1}^T$ be the
  sequence of hypotheses returned by an on-line algorithm sequentially
  processing $\S$. Let $\w = (w_1, \ldots, w_t)^\top$ be a vector of
  non-negative weights verifying $\sum_{t = 1}^T w_t = 1$. If the loss
  function $L$ is bounded by $M$ then, for any $\delta > 0$, with
  probability at least $1 - \delta$, each of the following inequalities hold:
\begin{align*}
& \sum_{t = 1}^T w_t\L_{D_{T+1}}(h_t)  \leq \sum_{t = 1}^T 
  w_t L(h_t(x_t), y_t) + \bar \Delta(\w, T) 
 + M\| \w  \|_2\sqrt{2\log \frac{1}{\delta}}\\
& \sum_{t = 1}^T w_t L(h_t(x_t),y_t)  \leq \sum_{t = 1}^T 
  w_t\L_{D_{T+1}}(h_t) + \bar \Delta(\w, T)
 + M\|  \w  \|_2\sqrt{2\log \frac{1}{\delta}},
\end{align*}
where $\bar \Delta(\w, T)$ denotes the average discrepancy $\sum_{t =
  1}^T w_t \dis_\cY(D_t,D_{T+1})$.
\end{lemma}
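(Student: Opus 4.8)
The plan is to view $\sum_{t=1}^T w_t L(h_t(x_t),y_t)$ as a sum of bounded martingale differences, apply Azuma's inequality, and then trade the per-step distributions $D_t$ for the target $D_{T+1}$ at the cost of the average discrepancy $\bar\Delta(\w,T)$. Let $\sigmaF_t$ denote the $\sigma$-algebra generated by $(x_1,y_1),\ldots,(x_t,y_t)$. Since the on-line algorithm produces $h_t$ after processing only the first $t-1$ examples, $h_t$ is $\sigmaF_{t-1}$-measurable, so $\E[L(h_t(x_t),y_t)\mid \sigmaF_{t-1}] = \E_{(x_t,y_t)\sim D_t}[L(h_t(x_t),y_t)] = \L_{D_t}(h_t)$. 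Setting $V_t = w_t\big(\L_{D_t}(h_t) - L(h_t(x_t),y_t)\big)$, the sequence $(V_t)_{t=1}^T$ is therefore a martingale difference sequence with respect to $(\sigmaF_t)$, and since $L$ takes values in $[0,M]$ we have $|V_t|\le w_t M$.

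First I would apply Azuma's inequality to $\sum_{t=1}^T V_t$ with constants $c_t = w_t M$, so that $\sum_t c_t^2 = M^2\|\w\|_2^2$ and
\[
\Pr\Big[\textstyle\sum_{t=1}^T V_t > \e\Big] \le \exp\!\Big(-\frac{\e^2}{2M^2\|\w\|_2^2}\Big).
\]
Setting the right-hand side equal to $\delta$ gives $\e = M\|\w\|_2\sqrt{2\log\frac1\delta}$, hence with probability at least $1-\delta$,
\[
\textstyle\sum_{t=1}^T w_t\L_{D_t}(h_t) \le \sum_{t=1}^T w_t L(h_t(x_t),y_t) + M\|\w\|_2\sqrt{2\log\tfrac1\delta}.
\]
Next I would pass from $D_t$ to $D_{T+1}$: since each $h_t\in H$, the definition of the $\cY$-discrepancy yields $|\L_{D_{T+1}}(h_t) - \L_{D_t}(h_t)| \le \dis_\cY(D_t,D_{T+1})$, so $\sum_t w_t\L_{D_{T+1}}(h_t) \le \sum_t w_t\L_{D_t}(h_t) + \bar\Delta(\w,T)$. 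Chaining the two displays gives the first inequality. The second inequality is obtained in exactly the same way, applying Azuma to $-V_t$ and using the discrepancy bound in the other direction, $\sum_t w_t\L_{D_t}(h_t) \le \sum_t w_t\L_{D_{T+1}}(h_t) + \bar\Delta(\w,T)$.

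There is no serious obstacle here; the only point requiring care is the timing/measurability convention — that $h_t$ depends on the sample only through its first $t-1$ points, which is precisely what makes $(V_t)$ a martingale difference sequence and lets the conditional expectation collapse to $\L_{D_t}(h_t)$ even though the $D_t$ differ. The remainder is the standard bounded-differences computation together with a single invocation of the inequality built into the definition of $\dis_\cY$. As with the footnote to Theorem~\ref{th:generalization}, one should note that Azuma's inequality does not require the $(x_t,y_t)$ to be identically distributed, only independence, which holds under $D=\bigotimes D_t$.
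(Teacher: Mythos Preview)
Your proposal is correct and follows essentially the same route as the paper: define the martingale differences $V_t=w_t(\L_{D_t}(h_t)-L(h_t(x_t),y_t))$, apply Azuma--Hoeffding with increments bounded by $w_tM$, and then shift each $\L_{D_t}$ to $\L_{D_{T+1}}$ using the defining inequality of $\dis_\cY$. Your explicit remark on the measurability of $h_t$ with respect to $\sigmaF_{t-1}$ is exactly the point the paper invokes when it says ``$h_t$ is determined at time $t-1$.''
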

\begin{proof}
  Consider the random process: $Z_t=w_t L(h_t(x_t), y_t) - w_t\L(h_t)$
  and let $\sigmaF_t$ denote the filtration associated to the sample
  process. We have: $|Z_t|\leq Mw_t$ and
\begin{equation*}
\E_{D}[Z_t|\sigmaF_{t-1}] = \E_{D}[w_tL(h_t(x_t), y_t)|\sigmaF_{t-1}] - \E_{D_t}[w_tL(h_t(x_t), y_t)] = 0
\end{equation*}
The second equality holds because $h_t$ is determined at time $t - 1$
and $x_t, y_t$ are independent of $\sigmaF_{t-1}$. Thus, by
Azuma-Hoeffding's inequality, for any $\delta > 0$, with probability
at least $1 - \delta$ the following holds:
\begin{equation}
\label{firstbound}
\sum_{t = 1}^T w_t\L_{D_t}(h_t) \leq \sum_{t = 1}^Tw_tL(h(x_t),y_t)+M\|
\w \|_2\sqrt{2\log \frac{1}{\delta}}.
\end{equation}
By definition of the discrepancy, the following inequality holds for
any $t \in [1, T]$:
\begin{equation*}
\L_{D_{T+1}}(h_t)\leq \L_{D_t}(h_t)+\dis_\cY(D_t, D_{T+1}).
\end{equation*} 
Summing up these inequalities and using \eqref{firstbound} to bound 
$\sum_{t = 1}^T w_t\L_{D_t}(h_t)$ proves the first statement. The
second statement can be proven in a similar way.
\qed\end{proof}
The following theorem is the main result of this section.
\begin{theorem} 
\label{th:onlinetobatch}
Assume that $L$ is bounded by $M$ and convex with respect to its first
argument.  Let $h_1, \ldots, h_T$ be the hypotheses returned by $\cA$
when sequentially processing $(x_t, y_t) _{t = 1}^T$ and let $h$ be
the hypothesis defined by $h = \sum_{t = 1}^T w_t h_t$, where $w_1,
\ldots, w_T$ are arbitrary non-negative weights verifying $\sum_{t =
  1}^T w_t = 1$. Then, for any $\delta > 0$, with probability at least
$1 - \delta$, $h$ satisfies each of the following learning guarantees:
\begin{align*}
& \L_{D_{T+1}}(h)  \leq \sum_{t = 1}^T w_tL(h_t(x_t),y_t) + \bar
\Delta(\w, T)  + M\| \w \|_2\sqrt{2 \log \frac{1}{\delta}}\\
& \L_{D_{T+1}}(h)
\leq \inf_{h \in H} \L(h) +  \frac{R_T}{T} + \bar \Delta(\w, T)  + M  \| \w - \u_0 \|_1 
+ 2M \| \w \|_2 \sqrt{2\log \frac{2}{\delta}},
\end{align*}
where $\w = (w_1, \ldots, w_T)^\top$, $\bar \Delta(\w, T) = \sum_{t = 1}^T w_t
  \dis_\cY(D_t,D_{T+1})$,  and $\u_0 \in \Rset^T$ is
the vector with all its components equal to $1/T$.
\end{theorem}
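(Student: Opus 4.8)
The plan is to derive both guarantees from Lemma~\ref{lemma:13} by invoking convexity to pass from the average of the $h_t$'s to $h = \sum_t w_t h_t$. Since $L$ is convex in its first argument and $\sum_t w_t = 1$ with $w_t \geq 0$, Jensen's inequality gives $\L_{D_{T+1}}(h) \leq \sum_{t=1}^T w_t \L_{D_{T+1}}(h_t)$. Plugging this into the first inequality of Lemma~\ref{lemma:13} immediately yields the first bound of the theorem, with the term $M\|\w\|_2\sqrt{2\log\frac{1}{\delta}}$ carried over verbatim.

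For the second bound, I would start again from $\L_{D_{T+1}}(h) \leq \sum_t w_t \L_{D_{T+1}}(h_t)$ and then apply the \emph{second} inequality of Lemma~\ref{lemma:13} to replace $\sum_t w_t \L_{D_{T+1}}(h_t)$ by $\sum_t w_t L(h_t(x_t),y_t) + \bar\Delta(\w,T) + M\|\w\|_2\sqrt{2\log\frac{2}{\delta}}$ (using confidence $\delta/2$). The next step is to relate the weighted empirical loss $\sum_t w_t L(h_t(x_t),y_t)$ to the uniformly-weighted one $\frac{1}{T}\sum_t L(h_t(x_t),y_t)$: the difference is bounded by $M\|\w - \u_0\|_1$, since each term $L(h_t(x_t),y_t)$ lies in $[0,M]$ and $\sum_t (w_t - 1/T) L(h_t(x_t),y_t) \leq M \sum_t |w_t - 1/T| = M\|\w-\u_0\|_1$. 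Then, by the definition of regret, $\frac{1}{T}\sum_t L(h_t(x_t),y_t) \leq \frac{R_T}{T} + \frac{1}{T}\inf_{h\in H}\sum_t L(h(x_t),y_t)$. Finally, I would bound $\frac{1}{T}\inf_{h\in H}\sum_t L(h(x_t),y_t) \leq \inf_{h\in H}\L(h)$ up to a high-probability error: for a fixed best-in-class $h^*$, the sequence $\frac{1}{T}L(h^*(x_t),y_t) - \frac{1}{T}\L_{D_t}(h^*)$ forms a bounded martingale difference sequence, so by Azuma–Hoeffding (again at confidence $\delta/2$) the empirical average is at most $\frac{1}{T}\sum_t \L_{D_t}(h^*) + M\sqrt{\frac{2\log(2/\delta)}{T}} \leq \inf_h \L(h) + \bar\Delta(\u_0,T) + M\sqrt{\frac{2\log(2/\delta)}{T}}$ after one more application of the discrepancy bound to shift $\L_{D_t}$ to $\L_{D_{T+1}}$. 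Collecting terms and using a union bound over the two $\delta/2$ events gives the stated inequality, noting that $\|\w\|_2 \geq \|\u_0\|_2 = 1/\sqrt{T}$ so the two martingale error terms can be absorbed into $2M\|\w\|_2\sqrt{2\log\frac{2}{\delta}}$.

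The main obstacle is bookkeeping the various discrepancy and concentration terms so that they combine exactly into the clean form stated — in particular making sure the two separate Azuma–Hoeffding applications (one inside Lemma~\ref{lemma:13}, one for the best-in-class comparison) and the $\|\w-\u_0\|_1$ rounding term all fit under a single union bound and the single factor $2M\|\w\|_2\sqrt{2\log\frac{2}{\delta}}$, which requires the inequality $\|\w\|_2 \geq 1/\sqrt{T}$. One should also double-check that the notation $\L(h)$ in the statement refers to $\L_{D_{T+1}}(h)$ (or is interpreted consistently with how the discrepancy terms were applied); the $\bar\Delta(\u_0,T)$ contribution from shifting the best-in-class hypothesis's distribution should match the $\bar\Delta(\w,T)$ appearing in the bound only if one is a bit careful, and in fact the cleanest route is to apply the discrepancy inequality $\L_{D_{T+1}}(h^*) \leq \L_{D_t}(h^*) + \dis_\cY(D_t,D_{T+1})$ directly so that $\inf_h \L_{D_{T+1}}(h)$ appears with the averaging weights already absorbed.
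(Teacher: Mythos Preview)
Your argument for the first inequality matches the paper exactly: Jensen's inequality followed by the first inequality of Lemma~\ref{lemma:13}. (A small slip: in your second paragraph you say you apply the \emph{second} inequality of Lemma~\ref{lemma:13} to bound $\sum_t w_t \L_{D_{T+1}}(h_t)$ from above by the weighted empirical loss --- that direction is the \emph{first} inequality, now at confidence $\delta/2$.)

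For the second inequality your route diverges slightly from the paper's. You pass to uniform weights on $L(h_t(x_t),y_t)$ alone via the $\|\w-\u_0\|_1$ bound, invoke the regret definition, and then run a fresh Azuma--Hoeffding argument for the fixed $h^\ast$ with \emph{uniform} weights $1/T$, which produces the concentration term $M\sqrt{2\log(2/\delta)/T}$ and the discrepancy term $\bar\Delta(\u_0,T)$; you then need $\|\w\|_2\ge 1/\sqrt T$ to fold the former into $2M\|\w\|_2\sqrt{2\log(2/\delta)}$. The paper instead applies the $\|\w-\u_0\|_1$ rounding to the \emph{difference} $L(h_t(x_t),y_t)-L(h^\ast(x_t),y_t)$ (still bounded by $M$ in absolute value), which leaves the quantity $\sum_t w_t L(h^\ast(x_t),y_t)$ still carrying the original weights $\w$; it then bounds this directly by the second inequality of Lemma~\ref{lemma:13} applied with every hypothesis equal to the fixed $h^\ast$. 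This yields $\bar\Delta(\w,T)+M\|\w\|_2\sqrt{2\log(2/\delta)}$ for the $h^\ast$ step without any appeal to $\|\w\|_2\ge 1/\sqrt T$, and both discrepancy contributions are $\bar\Delta(\w,T)$ rather than one $\bar\Delta(\w,T)$ and one $\bar\Delta(\u_0,T)$. Either argument is valid and ends with \emph{two} discrepancy terms; the single $\bar\Delta(\w,T)$ in the displayed bound is in fact not what the paper's own derivation produces --- it arrives at $2\bar\Delta(\w,T)$ --- so your worry about the bookkeeping not collapsing to a single $\bar\Delta(\w,T)$ is well founded and not a flaw in your reasoning.
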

Observe that when all weights are all equal to $\frac{1}{T}$, the
result we obtain is similar to the learning guarantee obtained in
theorem~\ref{th:generalization} when the Rademacher complexity of
$H_L$ is $O(\frac{1}{\sqrt{T}})$. Also, if the learning scenario is
i.i.d., then the first sum of the bound vanishes and it can be seen
straightforwardly that to minimize the RHS of the inequality we need
to set $w_t = \frac{1}{T}$, which results in the known i.i.d.\
guarantees for on-line to batch conversion
\citep{Littlestone,Cesa-BianchiConconiGentile2001}.  
\begin{proof}
  Since $L$ is convex with respect to its first argument, 
  by Jensen's inequality, we have $\L_{D_{T+1}}(\sum_{t = 1}^T
  w_th_t)\leq \sum_{t = 1}^T w_t\L_{D_{T+1}}(h_t)$. Thus, by
  Lemma~\ref{lemma:13}, for any $\delta > 0$, the following holds with
  probability at least $1 - \delta$:
\begin{equation}
\label{eq:firstguarantee}
\L_{D_{T+1}} \left(\sum_{t = 1}^T w_th_t\right)\leq \sum_{t = 1}^T
w_tL(h_t(x_t), y_t) + \bar \Delta(\w, T) + M\| \w \|_2\sqrt{2\log \frac{1}{\delta}}.
\end{equation}
This proves the first statement of the theorem. To prove the second
claim, we will bound the empirical error in terms of the regret.  For
any $h^* \in H$, we can write \linebreak  using $\inf_{h \in H} \frac{1}{T}\sum_{t
  = 1}^T L(h(x_t), y_t) \leq \frac{1}{T} \sum_{t = 1}^T L(h^*(x_t),
y_t)$:
\begin{align*}
  & \sum_{t = 1}^T w_t L(h_t(x_t), y_t) - \sum_{t = 1}^T w_t L(h^*(x_t), y_t)\\
  & = \!\sum_{t = 1}^T \! \Big(w_t \!-\! \frac{1}{T} \Big) [L(h_t(x_t), y_t)
  \!-\! L(h^*(x_t), y_t)] \!+\! \frac{1}{T} \!\sum_{t = 1}^T [L(h_t(x_t), y_t) \!-\!
  L(h^*(x_t), y_t)] \\
  & \leq M\| \w - \u_0 \|_1 +\frac{1}{T}\sum_{t = 1}^T
  L(h_t(x_t),y_t)-\inf_h \frac{1}{T} \sum_{t = 1}^T L(h(x_t), y_t)\\
 & \leq M\| \w - \u_0 \|_1+\frac{R_T}{T}.
\end{align*}
Now, by definition of the infimum, for any $\e > 0$, there exists $h^*
\in H$ such that $\L_{D_{T+1}}(h^*) \leq \inf_{h \in H}
\L_{D_{T+1}}(h) + \e$. For that choice of $h^*$, in view of
\eqref{eq:firstguarantee}, with probability at least $1 - \delta/2$,
the following holds:
\begin{equation*}
\L_{D_{T+1}}(h) 
\leq \sum_{t = 1}^T w_t L(h^*(x_t), y_t) + M\| \w_ - \u_0
\|_1+\frac{R_T}{T} + \bar \Delta(\w, T)  + M\| \w \|_2\sqrt{2\log \frac{2}{\delta}}.
\end{equation*}
By the second statement of Lemma~\ref{lemma:13}, for any $\delta > 0$,
with probability at least $1 - \delta/2$,
\begin{equation*}
  \sum_{t = 1}^T w_t L(h^*(x_t),y_t)  \leq \L_{D_{T+1}}(h^*) + \bar \Delta(\w, T) + M\|
  \w
  \|_2\sqrt{2\log \frac{2}{\delta}}.
\end{equation*}
Combining these last two inequalities, by the union bound, with
probability at least $1 - \delta$, the following holds with $B(\w,
\delta) = M\| \w_ - \u_0 \|_1+\frac{R_T}{T} + 2 M\| \w \|_2\sqrt{2\log
  \frac{2}{\delta}}$:
\begin{align*}
\L_{D_{T+1}}(h) 
& \leq \L_{D_{T+1}}(h^*) + 2 \bar \Delta(\w, T) + B(\w, \delta)\\
& \leq \inf_{h \in H} \L_{D_{T+1}}(h) + \e + 2 \bar \Delta(\w, T) + B(\w, \delta).
\end{align*}
The last inequality holds for all $\e > 0$, therefore also for $\e = 0$ by
taking the limit.\qed
\end{proof}

\section{Algorithm}
\label{sec:algorithm}

The results of the previous section suggest a natural algorithm based
on the values of the discrepancy between distributions.  Let $(h_t)_{t
  = 1}^T$ be the sequence of hypotheses generated by an on-line
algorithm. Theorem~\ref{th:onlinetobatch} provides a learning
guarantee for any convex combination of these hypotheses. The convex
combination based on the weight vector $\w$ minimizing the bound of
Theorem~\ref{th:onlinetobatch} benefits from the most favorable
guarantee.  This leads to an algorithm for determining $\w$ based on
the following convex optimization problem:
\begin{align}
\label{pb:optimization}
\min_{\w} & \quad \lambda \| \w \|_2^2 +\sum_{t = 1}^T w_t \,
 (\dis_\cY(D_t,D_{T+1})+L(h_t(x_t),y_t))\\[-.25cm]
\text{subject to:} & \quad \Big( \sum_{t = 1}^T w_t = 1 \Big) \wedge (\forall t \in [1, T], w_t \geq 0),\nonumber
\end{align}
where $\lambda \geq 0$ is a regularization parameter.  This is a
standard QP problem that can be efficiently solved using a variety of
techniques and available software.

In practice, the discrepancy values $\dis_\cY(D_t, D_{T+1})$
are not available since they require labeled samples. But, in the
deterministic scenario where the labeling function $f$ is in $H$, we
have $\dis_\cY(D_t, D_{T+1}) \leq \dis(D_t, D_{T+1})$. Thus, the
discrepancy values $\dis(D_t, D_{T+1})$ can be used instead in our
learning bounds and in the optimization \eqref{pb:optimization}. This
also holds approximately when $f$ is not in $H$ but is close to some
$h \in H$.

As shown in \citep{MansourMohriRostamizadeh2009}, given two
(unlabeled) samples of size $n$ from $D_t$ and $D_{T + 1}$, the
discrepancy $\dis(D_t, D_{T + 1})$ can be estimated within
$O(1/\sqrt{n})$, when $\R_n(H_L) = O(1/\sqrt{n})$.  In many realistic
settings, for tasks such as spam filtering, the distribution $D_t$
does not change within a day. This gives us the opportunity to collect
an independent \emph{unlabeled} sample of size $n$ from each
distribution $D_t$. If we choose $n \gg T$, by the union bound, with
high probability, all of our estimated discrepancies will be within
$O(1/\sqrt{T})$ of their exact counterparts $\dis(D_t, D_{T+1})$.

Additionally, in many cases, the distributions $D_t$ remain unchanged
over some longer periods (cycles) which may be known to us. This in
fact typically holds for some tasks such as spam filtering, political
sentiment analysis, some financial market prediction problems, and
other problems. For example, in the absence of any major political
event such as a debate, speech, or a prominent measure, we can expect
the political sentiment to remain stable. In such scenarios, it
should be even easier to collect an unlabeled sample from each
distribution. More crucially, we do not need then to estimate the
discrepancy for all $t \in [1, T]$ but only once for each cycle.

\subsection{Experiments}
 
Here, we report the results of preliminary experiments demonstrating
the performance of our algorithm. We tested our algorithm on synthetic
data in a regression setting.  The testing and training data were
created as follows: instances were sampled from a two-dimensional
Gaussian random variables $\mathcal{N}(\Mu_t, 1)$. The objective
function at each time was given by $y_t = \w_t \cdot \x_t$. The weight
vectors $\w_t$ and mean vectors $\Mu_t$ were selected as follows:
$\Mu_t = \Mu_{t-1} + \mathbf{U}$ and $\w_t = R_{\theta} \w_{t-1}$,
where $\mathbf{U}$ is the uniform random variable over $[-.1, +.1]^2$
and $R_\theta$ a rotation of magnitude $\theta$ distributed uniformly
over $(-1,1)$.  We used the Widrow-Hoff algorithm \cite{WidrowHoff88}
as our base on-line algorithm to determine $h_t$. After receiving $T$
examples, we tested our final hypothesis on $100$ points taken from
the same Gaussian distribution $\mathcal{N}(\Mu_{T+1}, 1)$.  We ran
the experiment $50$ times for different amounts of sample points and
took the average performance of our classifier. For these experiments,
we are considering the ideal situation where the discrepancy values
are given.

\begin{figure}[t]
\begin{center}
\includegraphics[scale=.42]{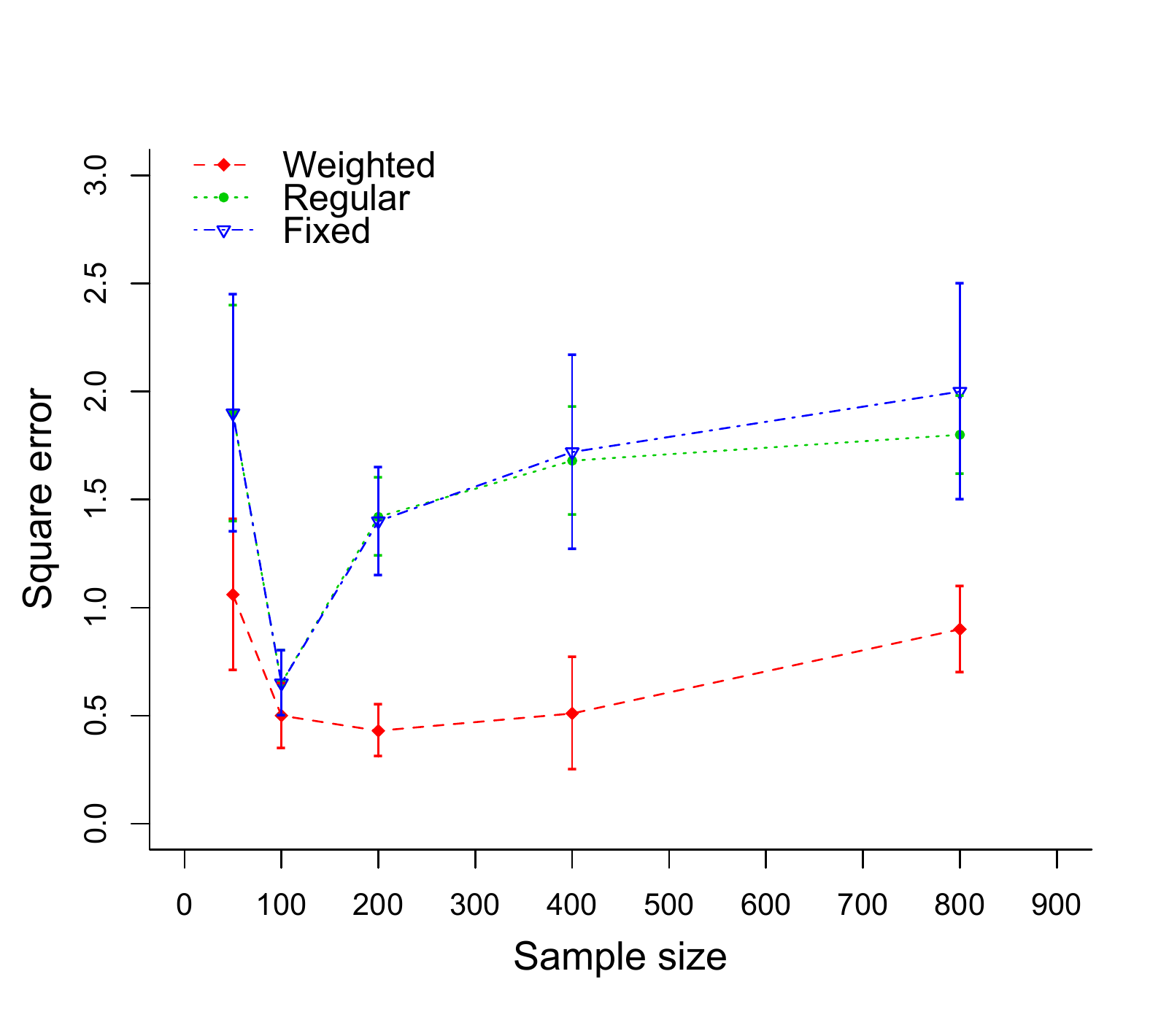}
\vspace{-.5cm}
\caption{
  \label{fig:performance} Comparison of the performance of three
  algorithms as a function of the sample size $T$. {\tts Weighted}
  stands for the algorithm described in this paper, {\tts Regular} for
  an algorithm that averages over all the hypotheses, and {\tts Fixed} 
  for the algorithm that averages only over the last 100 hypotheses.}
\end{center}
\vspace{-.5cm}
\end{figure}

We compared the performance of our algorithm with that of the
algorithm that (uniformly) averages all of the hypotheses and with
that of the algorithm that averages only the last 100 hypotheses
generated by the perceptron algorithm.  Figure~\ref{fig:performance}
shows the results of our experiments in the first setting.  Observe
that the error increases with the sample size. While the analysis of
Section~\ref{sec:pac} could provide an explanation of this phenomenon
in the case of the uniform averaging algorithm, in principle, it does
not explain why the error also increases in the case of our
algorithm. The answer to this can be found in the setting of the
experiment. Notice that the Gaussians considered are moving their
center and that the squared loss grows proportional to the radius of
the smallest sphere containing the sample. Thus, as the number of
points increases, so does the maximum value of the loss function in
the test set.  Nevertheless, our algorithm still outperforms the other
two algorithms. It is worth noting that the accuracy of our algorithm
can drastically change of course depending on the choice of the online
algorithm used.

\section{Conclusion}

We presented a theoretical analysis of the problem of learning with
drifting distributions in the batch setting. Our learning guarantees
improve upon previous ones based on the $L_1$ distance, in some cases
substantially, and our proofs are simpler and concise. These bounds
benefit from the notion of discrepancy which seems to be the natural
measure of the divergence between distributions in a drifting
scenario. This work motivates a number of related studies, in
particular a discrepancy-based analysis of the scenario introduced by
\cite{CrammerEven-DarMansourWortman2010} and further improvements of
the algorithm we presented, in particular by exploiting the specific
on-line learning algorithm used. 

\section{Acknowledgments}

We thank Yishay Mansour for discussions about the topic of this paper.

\bibliographystyle{splncs}
{\small 
\bibliography{drift}

\begin{thebibliography}{10}

\bibitem{Valiant1984}
Valiant, L.G.:
\newblock A theory of the learnable.
\newblock ACM Press New York, NY, USA (1984)

\bibitem{Vapnik1998}
Vapnik, V.N.:
\newblock Statistical Learning Theory.
\newblock J. Wiley \& Sons (1998)

\bibitem{Cesa-BianchiLugosi2006}
Cesa-Bianchi, N., Lugosi, G.:
\newblock Prediction, learning, and games.
\newblock Cambridge University Press (2006)

\bibitem{HerbsterWarmuth1998}
Herbster, M., Warmuth, M.:
\newblock Tracking the best expert.
\newblock Machine Learning \textbf{32} (1998)  151--78

\bibitem{HerbsterWarmuth2001}
Herbster, M., Warmuth, M.:
\newblock Tracking the best linear predictor.
\newblock Journal of Machine Learning Research \textbf{1} (2001)  281--309

\bibitem{CavallantiCesa-BianchiGentile2007}
Cavallanti, G., Cesa-Bianchi, N., Gentile, C.:
\newblock Tracking the best hyperplane with a simple budget perceptron.
\newblock Machine Learning \textbf{69} (2007)  143--167

\bibitem{HelmboldLong1994}
Helmbold, D.P., Long, P.M.:
\newblock Tracking drifting concepts by minimizing disagreements.
\newblock Machine Learning \textbf{14} (1994)  27--46

\bibitem{Bartlett1992}
Bartlett, P.L.:
\newblock Learning with a slowly changing distribution.
\newblock In: Proceedings of the fifth annual workshop on Computational
  learning theory. COLT '92, New York, NY, USA, ACM (1992)  243--252

\bibitem{Long1999}
Long, P.M.:
\newblock The complexity of learning according to two models of a drifting
  environment.
\newblock Machine Learning \textbf{37} (1999)  337--354

\bibitem{BarveLong1997}
Barve, R.D., Long, P.M.:
\newblock On the complexity of learning from drifting distributions.
\newblock Information and Computation \textbf{138} (1997)  101--123

\bibitem{FreundMansour1997}
Freund, Y., Mansour, Y.:
\newblock Learning under persistent drift.
\newblock In: EuroColt. (1997)  109--118

\bibitem{BartlettBen-DavidKulkarni2000}
Bartlett, P.L., Ben-David, S., Kulkarni, S.:
\newblock Learning changing concepts by exploiting the structure of change.
\newblock Machine Learning \textbf{41} (2000)  153--174

\bibitem{CrammerEven-DarMansourWortman2010}
Crammer, K., Even-Dar, E., Mansour, Y., Vaughan, J.W.:
\newblock Regret minimization with concept drift.
\newblock In: COLT. (2010)  168--180

\bibitem{MansourMohriRostamizadeh2009}
Mansour, Y., Mohri, M., Rostamizadeh, A.:
\newblock Domain adaptation: Learning bounds and algorithms.
\newblock In: Proceedings of COLT, Montr\'eal, Canada, Omnipress (2009)

\bibitem{Valiant2011}
Valiant, P.:
\newblock Testing symmetric properties of distributions.
\newblock SIAM J. Comput. \textbf{40} (2011)  1927--1968

\bibitem{Rakhlin2010}
Rakhlin, A., Sridharan, K., Tewari, A.:
\newblock Online learning: Random averages, combinatorial parameters, and
  learnability (2010)

\bibitem{smooth}
Cortes, C., Mohri, M.:
\newblock Domain adaptation in regression.
\newblock In: Proceedings of The 22nd International Conference on Algorithmic
  Learning Theory (ALT 2011). Volume to appear., Espoo, Finland, Springer,
  Heidelberg, Germany (2011)

\bibitem{DevroyeGyorfiLugosi1996}
Devroye, L., Gy\"{o}rfi, L., Lugosi, G.:
\newblock A Probabilistic Theory of Pattern Recognition.
\newblock Springer (1996)

\bibitem{KearnsVazirani97}
Kearns, M., Vazirani, U.:
\newblock An Introduction to Computational Learning Theory.
\newblock The MIT Press (1997)

\bibitem{Dudley84}
Dudley, R.M.:
\newblock A course on empirical processes.
\newblock Lecture Notes in Math. \textbf{1097} (1984)  2 -- 142

\bibitem{Pollard84}
Pollard, D.:
\newblock Convergence of Stochastic Processess.
\newblock Springer, New York (1984)

\bibitem{Talagrand2005}
Talagrand, M.:
\newblock The Generic Chaining.
\newblock Springer, New York (2005)

\bibitem{Littlestone}
Littlestone, N.:
\newblock From on-line to batch learning.
\newblock In: Proceedings of the second annual workshop on Computational
  learning theory, San Francisco, CA, USA, Morgan Kaufmann Publishers Inc.
  (1989)  269--284

\bibitem{Cesa-BianchiConconiGentile2001}
Cesa-Bianchi, N., Conconi, A., Gentile, C.:
\newblock On the generalization ability of on-line learning algorithms.
\newblock In: NIPS. (2001)  359--366

\end{thebibliography}
}

\ignore{
\appendix

\section{Lemma in support of Theorem~\ref{th:generalization}}

The following result is similar to the standard symmetrization lemma
used in the derivation of Rademacher complexity bounds. The definition
of the Rademacher complexity we use is different and adapted to the
sequential case. Here, we show that the result holds with this
definition as well.

\begin{lemma}[Symmetrization lemma]
\label{lemma:symetrization}
\begin{equation}
\E_{S \sim \prod_{t =1}^TD_t} \Big[ \sup_{h \in H}  \frac{1}{T} \sum_{t =
    1}^T \big( \L_{D_{t}}(h) - L(h(x_t), y_t) \big) \Big]
\leq  2\R_T(H_L).
\end{equation}
\end{lemma}

\begin{proof}
  Let $D$ denote the product distribution $D = \prod_{t = 1}^T D_t$
  and $S'= ((x'_1,y'_1), \ldots, (x'_T,y'_T))$ a labeled sample.  The
  proof follows the one based on the standard definition of Rademacher
  complexity, using Jensen's inequality:
\begin{align}
\E_{S \sim D}[\Phi(S)] 
& = \E_{S \sim D} \Big[ \sup_{h \in H}  \frac{1}{T} \sum_{t =
    1}^T \big( \L_{D_{t}}(h) - L(h(x_t), y_t) \big) \Big]\\
& = \E_{S \sim D} \Big[ \sup_{h \in H}  \frac{1}{T} \sum_{t =
    1}^T \big( \E_{(x'_t,y'_t) \sim D_t}[L(h(x'_t), y'_t)] - L(h(x_t), y_t) \big) \Big]\\
& = \E_{S \sim D} \Big[ \sup_{h \in H}  \E_{S' \sim D} \Big[ \frac{1}{T} \sum_{t =
    1}^T L(h(x'_t), y'_t) - L(h(x_t), y_t) \Big] \Big]\\
& \leq \E_{S, S' \sim D}  \Big[ \sup_{h \in H}  \frac{1}{T} \sum_{t =
    1}^T L(h(x'_t), y'_t) - L(h(x_t), y_t) \Big]
\label{eq:phi_expect_21}\\
& = \E_{\ssigma, S, S'}  \Big[ \sup_{h \in H}  \frac{1}{T} \sum_{t =
    1}^T \sigma_t \Big(  L(h(x'_t), y'_t) - L(h(x_t), y_t) \Big) \Big]\\
& \leq \E_{\ssigma, S'}  \Big[ \sup_{h \in H}  \frac{1}{T} \sum_{t =
    1}^T \sigma_t L(h(x'_t),y'_t) \Big]
+ \E_{\ssigma, S}  \Big[ \sup_{h \in H}  \frac{1}{T} \sum_{t =
    1}^T -\sigma_t  L(h(x_t), y_t) \Big]\\
& = 2 \E_{\ssigma, S}  \Big[ \sup_{h \in H}  \frac{1}{T} \sum_{t =
    1}^T \sigma_t  L(h(x_t), y_t) \Big] = 2\R_T(H).
\end{align}
The key step here is to verify the correctness of the introduction of
the Rademacher variables $\sigma_t$ in this sequential setting. Their
introduction does not change the expectation appearing in
\eqref{eq:phi_expect_21}: when $\sigma_t = 1$, the associated summand
remains unchanged; when $\sigma_t = -1$, the associated summand flips
signs, which is equivalent to swapping $(x_t,y_t)$ and $(x'_t,y'_t)$ between $S$
and $S'$.  Since we are taking the expectation over all possible
samples $S$ and $S'$, this does not affect the overall expectation.
\qed\end{proof}

}

\end{document}